\DeclareRobustCommand{\citep}[1]{\cite{#1}}
\DeclareRobustCommand{\citet}[1]{\citeauthor{#1}~\shortcite{#1}}
\newtheorem{theorem}{Theorem}
\newtheorem{corollary}{Corollary}[theorem]
\def\bzero{{\boldsymbol 0}}
\def\bt{{\boldsymbol t}}
\def\bw{{\boldsymbol w}}
\def\bx{{\boldsymbol x}}
\def\by{{\boldsymbol y}}
\def\bz{{\boldsymbol z}}
\def\bgamma{{\boldsymbol \gamma}}
\def\bbeta{{\boldsymbol \beta}}
\def\bZ{{\boldsymbol Z}}
\def\cH{{\mathcal H}}
\def\cX{{\mathcal X}}
\def\bE{{\mathbb E}}
\def\bP{{\mathbb P}}
\def\bR{{\mathbb R}}
\def\bI{{\mathbf I}}
\newcommand{\hk}{\hat{k}}
\newcommand{\tk}{\tilde{k}}
\newcommand{\Rcal}{{\mathcal{R}}}
\def\SingleWidth{0.23\textwidth}
\def\DoubleWidth{0.325\textwidth}
\begin{document}

\title{Data-driven Random Fourier Features using Stein Effect}
\author{Wei-Cheng Chang\\
  LTI, CMU\\
  wchang2@cs.cmu.edu\\
  \And
  Chun-Liang Li\\
  MLD, CMU\\
  chunlial@cs.cmu.edu\\
  \And
  Yiming Yang\\
  LTI, CMU\\
  yiming@cs.cmu.edu\\
  \And
  Barnab{\'a}s P{\'o}czos \\
  MLD, CMU\\
  bapoczos@cs.cmu.edu\\
}

\maketitle
\begin{abstract}
Large-scale kernel approximation is an important problem in machine learning research. Approaches using random Fourier features have become increasingly popular \cite{Rahimi_NIPS_07}, where kernel approximation is treated as empirical mean estimation via Monte Carlo (MC) or Quasi-Monte Carlo (QMC) integration \cite{Yang_ICML_14}. A limitation of the current approaches is that all the features receive an equal weight summing to 1.
In this paper, we propose a novel shrinkage estimator from "Stein effect", which provides a data-driven weighting strategy for random features and enjoys theoretical justifications in terms of lowering the empirical risk. We further present an efficient stochastic algorithm for large-scale applications of the proposed method.  
Our empirical results on six benchmark data sets demonstrate the advantageous performance of this approach over representative baselines in both kernel approximation and supervised learning tasks.
\end{abstract}

\vspace{-10pt}
\section{Introduction}
Kernel methods offer a comprehensive collection of non-parametric techniques for modeling a broad range of problems in machine learning. 
However, standard kernel machines such as kernel SVM or kernel ridge regression suffer from slow training and prediction which limits their use in real-world large-scale applications. 
Consider, for example, the training of linear regression over $n$ labeled data points~$\{(\bx_i,y_i)\}_{i=1}^n$ where $\bx_i \in \mathbb{R}^d$ with~$n \gg d$ and $y_i$ is a binary label. 
The time complexity of standard least square fit is~$O(nd^2)$ and the memory demand is $O(nd)$. 
Its kernel-based counterpart requires solving a linear system with an $n$-by-$n$ kernel matrix, which takes $O(n^3 + n^2d)$ time and $O(n^2)$ memory usage as typical. Such complexity is far from desirable in big-data applications.

In recent years, significant efforts have been devoted into low-rank kernel approximation for enhancing the scalability of kernel methods.  Among existing approaches, random features \cite{Rahimi_NIPS_07,Rahimi_NIPS_09,Le_ICML_13,Bach_arXiv_15} have drawn considerable attention and yielded state-of-the-art results in classification accuracy on benchmark data sets \cite{Huang_ICASSP_14,Dai_NIPS_14,Li_UAI_16}. Specifically, inspired from Bochner's theorem \cite{Rudin_book_11}, random Fourier features have been studied for evaluating the expectation of shift-invariant kernels (i.e., $k(\bx,\bx')=g(\bx-\bx')$ for some function $g$). \citet{Rahimi_NIPS_07} proposed to use Monte-Carlo methods (MC) to estimate the expectation; \citet{Yang_ICML_14} leveraged the low-discrepancy properties of Quasi-Monte Carlo (QMC) sequences to reduce integration errors.

Both the MC and QMC methods rely on the (implicit) assumption that all the $M$ random features are equally important, and hence assign a uniform weight $\frac{1}{M}$ to the features in kernel estimation.  Such a treatment, however, is arguably sub-optimal for minimizing the expected risk in kernel approximation. \citet{Avron_JMLR_16} presented a weighting strategy for minimizing a loose error bound which depends on the maximum norm of data points.
On the other hand, Bayesian Quadrature (BQ) is a powerful framework that solves the integral approximation by $\bE[f(x)] \approx \sum_m \beta_m f(x_m)$ where~$f(x_m)$ is feature function and $\beta_m$ is with non-uniform weights. BQ leverages Gaussian Process (GP) to model the prior of function $f(\cdot)$, and the resulted estimator is Bayes optimal~\cite{Huszar_UAI_12}.  Therefore, BQ could be considered a natural choice for kernel approximation. Nonetheless, a well-known limitation (or potential weakness) is that the performance of Bayesian-based models built upon GP heavily relies on extensive hyper-parameter tuning, e.g., on the condition of the covariance matrix in the Gaussian prior, which is not suitable for large real-world applications. 

To address the fundamental limitations of the aforementioned work, we propose a novel approach to data-driven feature weighting in the approximation of shift-invariant kernels, which motivated by the by Stein Effect in the statistical literature (Section \ref{sec:Stein}), and solve it using an efficient stochastic algorithm with a convex optimization objective.  
We also present a natural extension of BQ to the applications of kernel approximation (Section \ref{sec:BQ}). 
The adapted BQ together with standard MC and QMC methods serve as representative baselines in our empirical evaluations on six benchmark data sets.  
The empirical results (Section \ref{sec:exp}) show that the proposed Stein-Effect Shrinkage (SES) estimator consistently outperforms the baseline methods in terms of lowering the kernel approximation errors, and was competitive to or better than the best performer among the baseline methods in most task-oriented evaluations (on supervised learning of regression and classification tasks).

\section{Preliminaries}
Let us briefly outline the related background and key concepts in random-feature based kernel approximation.

\subsection{Reproducing Kernel Hilbert Space}
At the heart of kernel methods lies the idea that inner products in 
high-dimensional feature spaces can be computed in an implicit form 
via a kernel function $k : \cX \times \cX \rightarrow \bR$, which is defined
on a input data domain $\cX \subset \bR^d$ such that
\begin{equation*}
	k(\bx,\bx') = \langle \phi(\bx), \phi(\bx') \rangle_{\cH}.
\end{equation*}
where $\phi: \cX \rightarrow \cH$ is a feature map that associates kernel $k$
with an embedding of the input space into a high-dimensional Hilbert space $\cH$.
A key to kernel methods is that
as long as kernel algorithms have access to $k$, one need not to 
represent~$\phi(\bx)$ explicitly. Most often, that means although 
$\phi(\bx)$ can be high-dimensional or even infinite-dimensional, their
inner products, can be evaluated by $k$. This idea is known as the 
kernel trick.

The bottleneck of kernel methods lies in both training and prediction phase. The former requires to compute and store the kernel matrix $K \in \bR^{n \times n}$ for $n$ data points, while the latter need to evaluate the decision function $f(\bx)$ via kernel trick $f(\bx) = \sum_{i=1}^N \alpha_i k(\bx_i,\bx)$, which sums over the nonzero support vectors $\alpha_i$. Unfortunately, \citet{Steinwart_book_08} show that the number of nonzero support vectors growth linearly in the size of training data $n$. This posts a great challenge for designing scalable algorithms for kernel methods in large-scale applications. Some traditional way to address
this difficulty is by making trade-offs between time and space. \cite{Kivinen_SP_04,Chang_TIST_11}.

\subsection{Random Features (RF)}
\citet{Rahimi_NIPS_07} propose a low-dimensional feature map 
$z(\bx): \cX \rightarrow \bR^{2M}$ for the kernel function $k$
\begin{equation}
	k(\bx,\bx') \approx \langle z(\bx), z(\bx') \rangle	\label{eq:kernel-approx}
\end{equation}
under the assumption that $k$ fall in the family of shift-invariant kernel.
The starting point is a celebrated result that characterizes 
the class of positive definite functions:
\begin{theorem}[Bochner's theorem \cite{Rudin_book_11}]
	A continuous, real valued, symmetric and shift-invariant function
	$k$ on $\bR^d$ is a positive definite kernel if and only if 
	there is a positive finite measure $\bP(\bw)$ such that
	\begin{align}
		k(\bx-\bx') 
		= \int_{\bR^d}& 2\big[ \cos(\bw^T\bx)\cos(\bw^T\bx') \label{eq:bochner} \\
					&+ \sin(\bw^T\bx)\sin(\bw^T\bx') \big] d\bP(\bw). \nonumber
	\end{align}
\end{theorem}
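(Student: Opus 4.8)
The plan is to prove the stated equivalence in both directions, after first noting that the angle-subtraction identity collapses the integrand: $\cos(\bw^T\bx)\cos(\bw^T\bx') + \sin(\bw^T\bx)\sin(\bw^T\bx') = \cos\big(\bw^T(\bx-\bx')\big)$, so the claim is exactly that $k$ is the (real, cosine-form) Fourier transform of a finite positive measure $\bP$. The forward implication is elementary; the substance is the converse, which I would obtain by recovering $\bP$ as a weak limit of Gaussian-regularized Fourier transforms of $k$.

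For the ``if'' direction I would assume the integral representation and verify positive definiteness directly. Taking arbitrary points $\bx_1,\dots,\bx_n$ and real coefficients $c_1,\dots,c_n$, I would push the finite sum inside the integral and use the factorization $\sum_{i,j} c_i c_j \cdot 2\big[\cos(\bw^T\bx_i)\cos(\bw^T\bx_j) + \sin(\bw^T\bx_i)\sin(\bw^T\bx_j)\big] = 2\big[\big(\sum_i c_i \cos(\bw^T\bx_i)\big)^2 + \big(\sum_i c_i \sin(\bw^T\bx_i)\big)^2\big] \ge 0$. Since the integrand is nonnegative pointwise and $\bP$ is a positive measure, the quadratic form $\sum_{i,j} c_i c_j k(\bx_i-\bx_j)$ is nonnegative, so $k$ is positive definite. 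This step is routine.

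The work lies in the ``only if'' direction: given a continuous positive definite $k$, I must construct $\bP$. For each $\epsilon>0$ I would define the Gaussian-damped candidate density $p_\epsilon(\bw) = (2\pi)^{-d}\int_{\bR^d} k(\bt)\, e^{-i\bw^T\bt}\, e^{-\epsilon\|\bt\|^2/2}\, d\bt$, and the key claim is $p_\epsilon(\bw)\ge 0$ for all $\bw$. To prove this I would exploit positive definiteness through test functions: approximating by Riemann sums shows $\int_{\bR^d}\!\int_{\bR^d} k(\bt-\bz)\, \overline{g(\bt)}\, g(\bz)\, d\bt\, d\bz \ge 0$ for suitable $g$, and specializing $g(\bt) = e^{i\bw^T\bt}\, e^{-\epsilon\|\bt\|^2/4}$ collapses this double integral into a positive multiple of $p_\epsilon(\bw)$.

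The hard part will be passing to the limit $\epsilon\to 0$. I would first show the total mass $\int_{\bR^d} p_\epsilon(\bw)\, d\bw$ stays bounded, converging to $k(\bzero)$ via the Gaussian approximate identity together with continuity of $k$ at the origin; continuity at $\bzero$ likewise supplies tightness, so the positive measures $p_\epsilon(\bw)\, d\bw$ have a subsequence converging weakly (Prokhorov / Helly selection) to a finite positive measure. Applying weak convergence to the bounded continuous function $\bw\mapsto e^{i\bw^T\bt}$ in the regularized inversion identity $\int_{\bR^d} e^{i\bw^T\bt}\, p_\epsilon(\bw)\, d\bw = k(\bt)\, e^{-\epsilon\|\bt\|^2/2}$, whose right-hand side tends to $k(\bt)$, yields $k(\bt) = \int_{\bR^d} e^{i\bw^T\bt}\, d\bP'(\bw)$ for that limit measure $\bP'$. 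Finally, since $k$ is real and symmetric, $\bP'$ is invariant under $\bw\mapsto-\bw$, so the exponential reduces to its cosine part; re-expanding $\cos(\bw^T(\bx-\bx'))$ via the angle-subtraction identity and folding the even measure (the factor $2$ being absorbed into the normalization of $\bP$) reproduces the stated form. I expect this limiting/tightness argument to be the main obstacle, precisely because $k$ need not be integrable — so one cannot Fourier-invert it directly — and continuity of $k$ is exactly what rescues both the mass bound and the tightness.
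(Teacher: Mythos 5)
The paper does not prove this statement at all---it is quoted as a classical result and attributed to Rudin's book---so there is no in-paper argument to compare against; what you have written is the standard ``elementary'' proof of Bochner's theorem by Gaussian regularization, and it is correct in outline. The easy direction is fine. For the converse, your plan is the right one and you correctly locate the real work in the $\epsilon\to 0$ limit; two small points deserve explicit attention if you write it out. First, the hypothesis is positive definiteness of a \emph{real} symmetric kernel (real coefficients $c_i$), whereas the test-function inequality $\int\!\int k(\bt-\bz)\,\overline{g(\bt)}\,g(\bz)\,d\bt\,d\bz\ge 0$ uses complex $g$; you need the symmetry $k(-\bt)=k(\bt)$ to upgrade the real quadratic form to the complex one (split $g$ into real and imaginary parts and check the cross terms cancel). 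Second, the regularized inversion identity $\int e^{i\bw^T\bt}p_\epsilon(\bw)\,d\bw = k(\bt)e^{-\epsilon\|\bt\|^2/2}$ presupposes that $p_\epsilon$ is integrable, which is not automatic from its definition; the clean route is the standard lemma that a continuous integrable function with nonnegative Fourier transform has an integrable Fourier transform with total integral equal to the function's value at $\bzero$ (proved by mollifying with a second Gaussian and using Fatou), which simultaneously gives your mass bound $\int p_\epsilon = k(\bzero)$. The tightness claim should likewise be backed by the usual truncation inequality bounding $\int_{\|\bw\|>R}p_\epsilon(\bw)\,d\bw$ by an average of $k(\bzero)-k(\bt)$ over $\|\bt\|\lesssim 1/R$, uniformly in $\epsilon$; continuity at $\bzero$ then does exactly what you say. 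The Gaussian-width bookkeeping (your choice of $g$ produces $p_{\epsilon/4}$ rather than $p_\epsilon$ after the change of variables $\bu=\bt-\bz$, $\bv=\bt+\bz$) and the absorption of the factor $2$ into the normalization of $\bP$ are harmless. With those details filled in, the proof is complete and consistent with the cited source.
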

\noindent The most well-known kernel that belongs to the shift-invariant
family is Gaussian kernel 
$k(\bx,\bx') = \exp( -\frac{\|\bx-\bx'\|^2}{2\sigma^2} )$,
the associated density $\bP(\bw)$ is again Gaussian, $N(\bzero, \sigma^{-2}\bI_d)$.

\citet{Rahimi_NIPS_07} approximate 
the integral representation of the kernel \eqref{eq:bochner} 
by Monte Carlo method as follows,
\begin{equation}
	k(\bx-\bx')
    \approx \frac{1}{M} \sum_{m=1}^M h_{\bx,\bx'}(\bw_m) 
   	= \langle z(\bx), z(\bx') \rangle,  \label{eq:kernel-mc-approx}
\end{equation}
where 
\begin{align}
	h_{\bx,\bx'}(\bw) &\coloneqq  2\big[ \cos(\bw^T\bx)\cos(\bw^T\bx')+\sin(\bw^T\bx)\sin(\bw^T\bx') \big] \nonumber \\
    z(\bx) &\coloneqq \frac{1}{\sqrt{M}} [\phi_{\bw_1}(\bx),\ldots,\phi_{\bw_M}(\bx)] \label{eq:rf} \\
    \phi_{\bw}(\bx) &\coloneqq \sqrt{2}[\cos(\bw^T\bx), \sin(\bw^T\bx)] \nonumber
\end{align}
and $\bw_1,\ldots,\bw_M$ are i.i.d. samples from $\bP(\bw)$.

After obtaining the randomized feature map $z$, training data could be transformed into
$\{(z(\bx_i),y_i\}_{i=1}^n$. As long as $M$ is sufficiently smaller than $n$, 
this leads to more scalable solutions, e.g., for regression we get back to $O(nM^2)$ 
training and $O(Md)$ prediction time, with $O(nM)$ memory requirements. 
We can also apply random features to unsupervised learning problems, such as kernel clustering ~\cite{Chitta_ICDM_12}.

\subsection{Quasi-Monte Carlo Technique}
Instead of using plain MC approximation, \citet{Yang_ICML_14} propose to use the low-discrepancy properties of Quasi-Monte Carlo (QMC) sequences to reduce the integration error in approximations of the form \eqref{eq:kernel-mc-approx}. Due to space limitation, we restrict our discussion to the background that is necessary for understanding subsequent sections. We refer interested readers to the comprehensive reviews \cite{Caflisch_Acta_98,Dick_Acta_13} for more detailed exposition.

The QMC method is generally applicable to integrals over a unit cube. So the procedure is to first generate a
discrepancy sequence $\bt_1,\ldots,\bt_M \in [0,1]^d$, and transform it into a sequence $\bw_1,\ldots,\bw_M$ in $\bR^d$.
To convert the integral presentation of kernel \eqref{eq:bochner} to an integral over the unit cube, a simple change of
variables suffices. For $\bt \in \bR^d$, define~$\Phi^{-1}(\bt) = [\Phi_1^{-1}(t_1),\ldots,\Phi_d^{-1}(t_d)] \in \bR^d$,
where we assume that the density function in \eqref{eq:bochner} can be written as~$p(\bw) = \prod_{i=1}^M p_j(\bw_j)$ and $\phi_j$ is the cumulative distribution function (CDF) of $p_j, j=1,\ldots,d$. By setting $\bw = \Phi^{-1}(\bt)$, the integral \eqref{eq:bochner} is equivalent to 
\begin{equation*}
	k(\bx-\bx') 
    =\int_{\bR^d} h_{\bx,\bx'}(\bw)p(\bw)d\bw
    = \int_{[0,1]^d} h_{\bx,\bx'}( \Phi^{-1}(\bt)) d\bt.
\end{equation*}

\section{Bayesian Quadrature for Random Features}
\label{sec:BQ}
Random features constructed by either MC \cite{Rahimi_NIPS_07} or QMC \cite{Yang_ICML_14} approaches employ equal weights $1/M$ on integrand functions as shown in \eqref{eq:kernel-mc-approx}. An important question is how to construct different weights on the integrand functions to have a better kernel approximation.

Given a fixed QMC sequence, \citet{Avron_JMLR_16} solve the weights by optimizing a derived error bound based on the observed data.  There are two potential weakness of \cite{Avron_JMLR_16}.  First, \citet{Avron_JMLR_16} does not fully utilize the data information because the optimization objective only depends on the upper bound information of $|\bx_i|$ instead of the distribution of $\bx_i$.  Second, the error bound used in \citet{Avron_JMLR_16} is potentially loose. Due to technical reasons, \citet{Avron_JMLR_16} relaxes $h_{\bx,\bx'}(\bw)$ to $h_{\bx,\bx'}(\bw)\mbox{sinc}(T\bw)$ when they derive the error bound to optimize. It is not studied whether the relaxation results in a proper upper bound.

On the other hand, selecting weights by such way is closely connected to the Bayesian Quadrature (BQ), originally proposed by \citet{Ghahramani_NIPS_02} and theoretically guaranteed under Bayes assumptions. BQ is a Bayesian approach that puts prior knowledge on functions to solve the integral estimation problem. We first introduce the work of \citet{Ghahramani_NIPS_02}, and then discuss how to apply BQ to kernel approximations.

Standard BQ considers a single integration problem with form
\begin{equation}
	\bar{f} = \int f(\bw) d \bP(\bw).
	\label{eq:bq_single}
\end{equation}
To utilize the information of $f$, BQ puts a Gaussian Process (GP) prior on $f$
with mean $0$ and covariance matrix $K_{GP}$ where  
\begin{equation*}
	K_{GP}(\bw,\bw') 
    = Cov( f(\bw), f(\bw') ) 
    = \exp(-\frac{\|\bw-\bw'\|_2^2}{2\sigma_{GP}^2}).
\end{equation*}
After conditioning on sample $\bw_1,\cdots,\bw_M$ from $\bP$, we obtain a closed-form GP posterior over $f$ and use the posterior to get the
optimal Bayes estimator $\hat{f}$\footnote{For the square loss.} as
\begin{align*}
		\hat{f} 
        &= \bE_{GP}\left[\int f(\bw)d\bP(\bw) \right] \\ 
		&= \bgamma^\top K_{GP}^{-1}f(W) 
		=  \sum_{m=1}^M \beta_{BQ}^{(m)}f(\bw_m), 
\end{align*}
where $\gamma_m = \int K_{GP}(\bw, \bw_m)d\bP(\bw)$, 
$\beta_{BQ}^m = (K_{GP}^{-1} \bgamma)_m$,
and $f(W)=[f(\bw_1),\cdots,f(\bw_M)]$. 
Clearly, the resulting estimator is simply a new weighted
combination of the empirical observations 
$f(\bw_m)$ by $\beta_{BQ}^{(m)}$ derived from the Bayes perspective.

In kernel approximation, we could treat it as a series of the single integration problem. Therefore, we could directly apply BQ on it. Here we discuss the technical issues of applying BQ on kernel approximation. First, we need to evaluate~$\bgamma$. For Gaussian kernel, we derive the closed-form expression
\begin{equation*}
	\gamma_m 
    = \int K_{GP}(\bw,\bw_m)\bP(\bw)
    = \exp\big( \frac{\bw_m^2}{\sigma_{GP}^2 + \sigma^{-2}} \big)
\end{equation*}
where $\gamma_m$ is a random variable evaluated at $\bw_m$.

\subsection{Data-driven by Kernel Hyperparameters}
The most import issue of applying BQ on kernel approximation is that it models the information of $h_{\bx,\bx'}$ by the kernel $K_{GP}$. In practice, we usually use Gaussian kernel for $K_{GP}$, then the feature information of $(\bx, \bx')$ is modeling by the kernel bandwidth. As suggested by \citet{Rasmussen_GP_06}, one could tune the best hyper-parameter either by maximizing the marginal likelihood or cross-validation based on the metrics in interested. Note that the existing BQ approximates the integral representation of a kernel function value, evaluated at a single pair of data $(\bx,\bx')$. However, it is unfeasible to tune individual $\sigma_{GP}$ for $n^2$ pairs of data. 
Therefore, we tune a global hyper-parameter $\sigma_{GP}$ on a subset of pair data points.

\section{Stein Effect Shrinkage (SES) Estimator}\label{sec:Stein}
In practice, it is well known that the performance of Gaussian Process based models heavily depend on hyper-parameter $\sigma_{GP}$, i.e. the bandwidth of kernel function $k_{GP}$. Therefore, instead of using Bayesian way to obtain the optimal Bayes estimator from BQ, we start from the risk minimization perspective by considering the \emph{Stein effect} to derive the \emph{shrinkage estimator} for random features.

The standard estimator of Monte-Carlo methods is the unbiased empirical average using equal weights $\frac{1}{M}$ for $M$ samples which sum to 1. However, \emph{Stein effect} (or James-–Stein effect) suggests a family of biased estimators could result in a lower risk than the standard empirical average. Here we extend the analysis of non-parametric Stein effect to the kernel approximation with random features by considering the risk function $\Rcal(k, k') = \bE_{\bx,\bx',\bw}( k(\bx-\bx')-k'(\bx-\bx';\bw) )^2$.

\begin{theorem} \label{thm:stein} (Stein effect on kernel approximation)
Let $\hk(\bx-\bx', \bw) = \frac{1}{M}\sum_{m=1}^M h_{\bx,\bx'}(\bw_m)$.
For any estimator $\mu$, which is independent of $\bx$ and $\bw$, there exists $0\leq \alpha <1$ such that 
$\tk(\cdot) = \alpha\mu + (1-\alpha)\hk(\cdot)$ is an estimator with lower risk $\Rcal(k, \tk)<\Rcal(k, \hk)$.
\end{theorem}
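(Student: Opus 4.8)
The plan is to reduce the claim to a one-dimensional quadratic minimization in $\alpha$ by exploiting that $\hk$ is an \emph{unbiased} estimator of $k$. First I would record this unbiasedness: by Bochner's theorem \eqref{eq:bochner}, $k(\bx-\bx') = \int h_{\bx,\bx'}(\bw)\,d\bP(\bw) = \bE_{\bw}[h_{\bx,\bx'}(\bw)]$, and since $\bw_1,\dots,\bw_M$ are i.i.d.\ draws from $\bP$, linearity gives $\bE_{\bw}[\hk(\bx-\bx',\bw)] = k(\bx-\bx')$ for every fixed pair $(\bx,\bx')$. Consequently $\Rcal(k,\hk) = \bE_{\bx,\bx'}\big[\Var_{\bw}(\hk)\big]$ is a pure variance term, which I will abbreviate as $B$.

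Next I would expand the shrinkage risk by writing $\tk - k = \alpha(\mu - k) + (1-\alpha)(\hk - k)$ and squaring; the cross term is $2\alpha(1-\alpha)(\mu-k)(\hk-k)$. Here the hypothesis that $\mu$ is independent of $\bw$ is exactly what is needed: conditioning on $(\bx,\bx')$ and taking $\bE_{\bw}$, the factor $(\mu - k)$ is $\bw$-measurable-free and pulls out, while the remaining $\bE_{\bw}[\hk - k]$ vanishes by the unbiasedness just established. Taking the full expectation over $(\bx,\bx',\bw)$ therefore collapses the risk to a clean quadratic,
\begin{equation*}
\Rcal(k,\tk) = \alpha^2 A + (1-\alpha)^2 B, \qquad A := \bE_{\bx,\bx'}\big[(\mu - k)^2\big].
\end{equation*}

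Finally I would analyze the scalar function $g(\alpha) = \alpha^2 A + (1-\alpha)^2 B$ on $[0,1)$. Since $g(0) = B = \Rcal(k,\hk)$ while $g'(0) = -2B < 0$, the risk strictly decreases as $\alpha$ moves off $0$, so any sufficiently small $\alpha \in (0,1)$ already yields $\Rcal(k,\tk) < \Rcal(k,\hk)$. To be explicit, the minimizer is $\alpha^\star = B/(A+B) \in [0,1)$ with minimal value $g(\alpha^\star) = AB/(A+B) < B$, which both confirms the strict improvement and exhibits an admissible shrinkage coefficient.

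The one genuine hypothesis that must hold is $B = \Rcal(k,\hk) > 0$, i.e.\ the Monte-Carlo estimator has strictly positive variance; this is the single step I would flag as the real obstacle, because if $h_{\bx,\bx'}(\bw)$ were almost surely constant in $\bw$ then $\hk$ would already be exact and no shrinkage could help. I would dispatch it by observing that $h_{\bx,\bx'}$ is non-degenerate in $\bw$ for the shift-invariant kernels of interest (equivalently $\bP$ is not a point mass), so $B>0$ and the strict inequality is genuine. Everything else is the routine quadratic-in-$\alpha$ computation above.
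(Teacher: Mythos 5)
Your proof is correct and follows essentially the same route as the paper's: the quadratic $\alpha^2 A + (1-\alpha)^2 B$ you derive is exactly the paper's bias--variance decomposition (your cross-term argument just makes explicit why it holds), and your minimizer $\alpha^\star = B/(A+B)$ is the paper's $\alpha^*$. Your one genuine addition is flagging the hypothesis $B = \Rcal(k,\hk) > 0$, which the paper leaves implicit but which is indeed needed for the strict inequality.
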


\begin{proof}
The proof closely follows ~\citet{MuandetFSGS2013} for a different problem under the non-parametric setting
\footnote{The original Stein effect has a Gaussian distribution assumption.}.
By bias-variance decomposition, we have 
\[
\renewcommand{\arraystretch}{1.5}
\begin{array}{ccl}
	\Rcal(k, \hk) & = & \bE_{\bx, \bx'}\big[\mbox{Var}_\bw(\hk(\bx-\bx',\bw))\big]\\ 
	\Rcal(k, \tk) & = & (1-\alpha)^2\bE_{\bx, \bx'}\big[\mbox{Var}_\bw(\hk(\bx- \bx',\bw))\big] \\
				& & + \alpha^2\bE_{\bx, \bx'}\big[\left(\mu-k(\bx-\bx')\right)^2\big].
\end{array}
\]
By elementary calculation, we have $\Rcal(k, \tk)<\Rcal(k, \hk)$ when 
\[
	0 \leq \alpha \leq \frac{2\bE_{\bx, \bx'}\big[\mbox{Var}_\bw(\hk(\bx-\bx',\bw))\big]}{E_{\bx, \bx'}\big[\mbox{Var}_\bw(\hk(\bx-\bx',\bw))\big] + \bE_{\bx, \bx'}\big[\left(\mu-k(\bx-\bx')\right)^2\big]}
\]
and the optimal value happened at  
\[
	\alpha^* = \frac{\bE_{\bx, \bx'}\big[\mbox{Var}_\bw(\hk(\bx-\bx',\bw))\big]}{E_{\bx, \bx'}\big[\mbox{Var}_\bw(\hk(\bx-\bx',\bw))\big] + \bE_{\bx, \bx'}\big[\left(\mu-k(\bx-\bx')\right)^2\big]} < 1
\]
\end{proof}

\begin{corollary} (Shrinkage estimator)
If $k(\bx-\bx')>0$, there is a shrinkage estimator $\tk(\cdot) = (1-\alpha^*)\hk(\cdot)$ that has lower risk than the empirical mean estimator 
$\hk(\cdot)$ by setting $\mu=0$.
\end{corollary}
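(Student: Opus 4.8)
The plan is to obtain the Corollary as a direct specialization of Theorem~\ref{thm:stein}, since the shrinkage estimator $(1-\alpha^*)\hk(\cdot)$ is precisely the estimator $\tk = \alpha\mu + (1-\alpha)\hk$ produced by the theorem when the reference value is taken to be $\mu = 0$. First I would verify that $\mu = 0$ is an admissible choice: the hypothesis of the theorem requires $\mu$ to be independent of $\bx$ and $\bw$, and a constant trivially satisfies this. With this substitution $\alpha\mu + (1-\alpha)\hk = (1-\alpha)\hk$, so the theorem immediately supplies an $\alpha \in [0,1)$ for which $\Rcal(k,(1-\alpha)\hk) < \Rcal(k,\hk)$.

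The second step is to read off the optimal shrinkage coefficient from the formula for $\alpha^*$ established in the proof of the theorem. Setting $\mu = 0$ collapses the term $\bE_{\bx,\bx'}[(\mu - k(\bx-\bx'))^2]$ into $\bE_{\bx,\bx'}[k(\bx-\bx')^2]$, giving
\[
\alpha^* = \frac{\bE_{\bx,\bx'}[\Var_\bw(\hk(\bx-\bx',\bw))]}{\bE_{\bx,\bx'}[\Var_\bw(\hk(\bx-\bx',\bw))] + \bE_{\bx,\bx'}[k(\bx-\bx')^2]}.
\]
Here the hypothesis $k(\bx-\bx') > 0$ enters: it guarantees $\bE_{\bx,\bx'}[k(\bx-\bx')^2] > 0$, so the denominator strictly exceeds the numerator and $\alpha^* < 1$, confirming the claimed form $\tk = (1-\alpha^*)\hk$ with a genuine, non-vanishing contraction factor.

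The one point that needs care --- and the main obstacle --- is establishing that the improvement is strict rather than merely weak, i.e.\ that $\alpha^* > 0$. This requires $\bE_{\bx,\bx'}[\Var_\bw(\hk(\bx-\bx',\bw))] > 0$, which is the statement that the Monte-Carlo estimator $\hk$ is genuinely non-degenerate: its variance over the random frequencies $\bw$ must not vanish for almost every pair $(\bx,\bx')$. I would argue this holds whenever $M$ is finite and the spectral measure $\bP(\bw)$ is non-degenerate, since then $h_{\bx,\bx'}(\bw)$ is a non-constant function of $\bw$ on a set of positive measure, so its variance is strictly positive. Granting this, the numerator of $\alpha^*$ is strictly positive, hence $\alpha^* \in (0,1)$, and substituting back into the bias--variance risk decomposition of Theorem~\ref{thm:stein} yields $\Rcal(k,\tk) < \Rcal(k,\hk)$ with strict inequality, completing the argument.
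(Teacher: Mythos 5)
Your proposal is correct and follows essentially the same route the paper intends: the corollary is the direct specialization of Theorem~\ref{thm:stein} to the constant (hence data- and frequency-independent) choice $\mu=0$, with $\alpha^*$ read off from the bias--variance decomposition in the theorem's proof, where the bias term becomes $\bE_{\bx,\bx'}[k(\bx-\bx')^2]$. The paper offers no separate proof of the corollary, and your added care about strictness ($\alpha^*>0$ requiring nondegenerate variance of $\hk$ over $\bw$) is a reasonable tightening of an assumption the paper leaves implicit.
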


The standard shrinkage estimator derived from Stein effect uses the equal weights $\frac{1-\alpha}{M}$ to minimize the variance; however, a non-uniform weights usually results in better performance in practice~\cite{Huszar_UAI_12}. Therefore, based on the analysis of Theorem~\ref{thm:stein}, we proposed the estimator $\tk(\bx-\bx',\bw)=\sum_{m=1}^M \beta_m h(\bw_m)$ obtained  by minimizing the empirical risk with the constraint $\sum_{m=1}^M \beta_m^2 \leq c$ to shrink the weights, where $c$ is a small constant. With Lagrangian multiplier, we end up the following risk minimization formulation which is \emph{data-driven} by directly taking $k(\bx,\bx')$ into account, 
\begin{equation}
	\min_{\bbeta \in \bR^M} \sum_{(\bx,\bx') \sim D} 
    	\Big[ k(\bx,\bx') - [z(\bx)\circ z(\bx')]^T\bbeta \Big]^2 + \lambda_{\beta} \|\bbeta\|^2,
        \label{eq:cvx-obj-beta}
\end{equation}
where $\circ$ denotes the Hadamard product, and $\lambda$ is the regularization coefficient to shrink the weights.
 
The objective function \eqref{eq:cvx-obj-beta} can be further simplified as a least squares regression (LSR) formulation,
\begin{equation}
	\min_{\bbeta \in \bR^{M}} 
    	\| \bt - Z \bbeta \|^2 + \lambda_{\beta} \bbeta^T\bbeta,
        \label{eq:ridge-origin}
\end{equation}
where $\bt \in \bR^{n^2}, Z \in \bR^{n^2 \times M}$, such that for all $i,j \in S=\{1,\ldots,n\}$, there exist a corresponding pair mapping function $\xi: \xi_S(i,j) \rightarrow k$ satisfying $t_k \coloneqq \phi(\bx_i)^T\phi(\bx_j)$ and $Z(k,:) \coloneqq \bz(\bx_i) \circ \bz(\bx_j)$.

We approximately solve optimization problem \eqref{eq:ridge-origin} via the matrix sketching techniques
\cite{avron2013sketching,avron2016faster,woodruff2014sketching}:
\begin{equation}
    \min_{\bbeta \in \bR^{M}} 
    	\| S\bt - SZ \bbeta \|^2 + \lambda_{\beta} \bbeta^T\bbeta,
        \label{eq:ridge-sketch}
\end{equation}
where $S \in \bR^{r\times n^2}$ is the sketching matrix. Solving the sketched LSR problem \eqref{eq:ridge-sketch} now
costs only $O(rd^2 + T_s)$ where $T_s$ is the time cost of sketching. From the optimization perspective, suppose
$\bbeta^{*}$ be the optimal solution of LSR problem \eqref{eq:ridge-origin}, and $\tilde{\bbeta}$ be the solution of
sketched LSR problem \eqref{eq:ridge-sketch}. \cite{wang2017sketched} prove that if $r=O(M/\epsilon + poly(M))$,
the the objective function value of \eqref{eq:ridge-sketch} at $\tilde{\bbeta}$ is at most $\epsilon$ worse than the
value of \eqref{eq:ridge-origin} at $\bbeta^{*}$.
In practice, we consider an efficient $T_s=O(1)$ sampling-based sketching matrix where $S$ is a diagonal matrix with $S_{i,i} = 1/p_i$
if row $i$ was include in the sample, and $S_{i,i} = 0$ otherwise. That being said, we form a sub-matrix of 
$Z$ by including each row of $Z$ in the sub-matrix independently with probability $p_i$. \cite{woodruff2014sketching} 
suggests to set $p_i$ proportional to $\|Z_i\|$, the norm of $i$th row of $Z$ to have a meaningful error bound.

\begin{algorithm}
	\caption{Data-driven random feature (SES)}
	\label{alg:proposed-alg}
	\begin{algorithmic}[1]
		\Procedure{SES}{}
		\State Compute the Fourier transform $p$ of the kernel  
		\State Generate a sequence $\bw_1,\ldots,\bw_M$ by QMC or MC
		\State Compute un-weighted random feature $z(\bx)$ by \eqref{eq:rf}
		\State Calculate shrinkage weight by \eqref{eq:ridge-sketch}
		\State Compute data-driven random features by 
			\begin{equation*}
    			z'(\bx) = diag(\sqrt{\bbeta}) z(\bx)
    		\end{equation*}
		\EndProcedure
	\end{algorithmic}
\end{algorithm}

%

\section{Experiments}
\label{sec:exp}
\begin{figure*}[!ht]
    \centering
	\begin{subfigure}[b]{\DoubleWidth}
        \includegraphics[width=\textwidth]{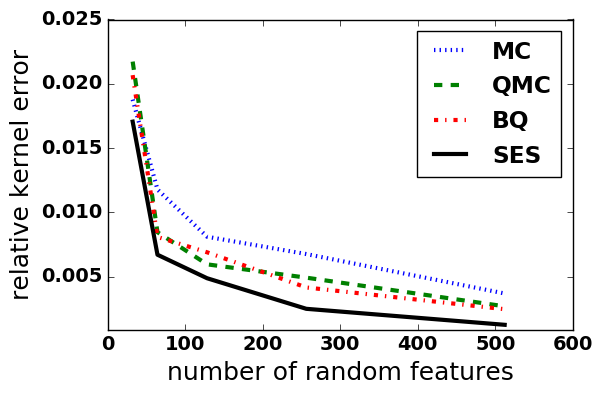}
		\caption{cpu}
		\label{fig:cpu-kdiff}
	\end{subfigure}
	\begin{subfigure}[b]{\DoubleWidth}
        \includegraphics[width=\textwidth]{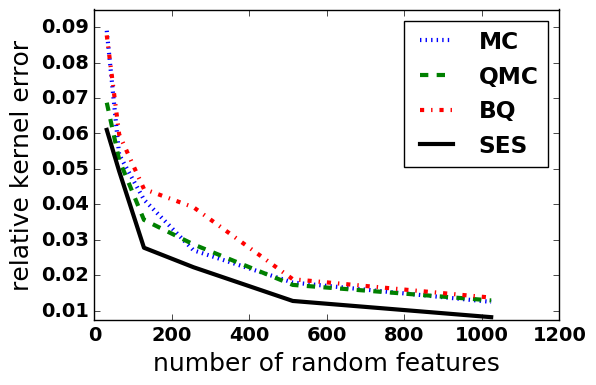}
		\caption{census}
		\label{fig:census-kdiff}
	\end{subfigure}
	\begin{subfigure}[b]{\DoubleWidth}
		\includegraphics[width=\textwidth]{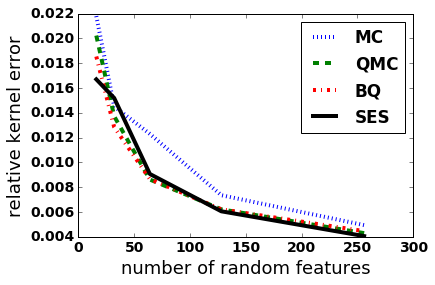}
		\caption{years}
		\label{fig:years-kdiff}
	\end{subfigure}
	
	\begin{subfigure}[b]{\DoubleWidth}
		\includegraphics[width=\textwidth]{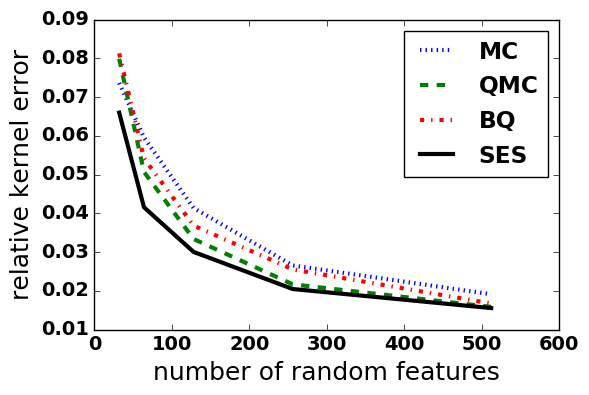}
        \caption{adult}
		\label{fig:adult-kdiff}
	\end{subfigure}
	\begin{subfigure}[b]{\DoubleWidth}
        \includegraphics[width=\textwidth]{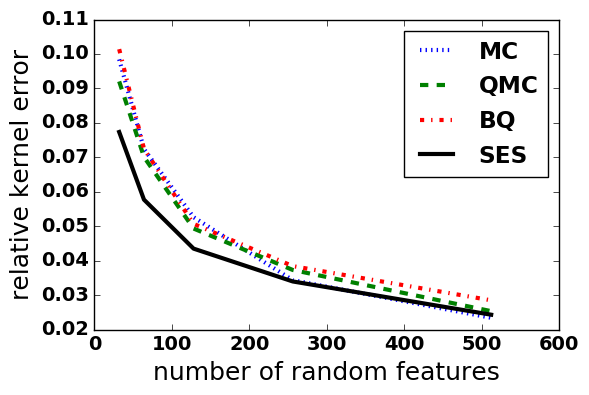}
		\caption{mnist}
		\label{fig:mnist-kdiff}
	\end{subfigure}
	\begin{subfigure}[b]{\DoubleWidth}
		\includegraphics[width=\textwidth]{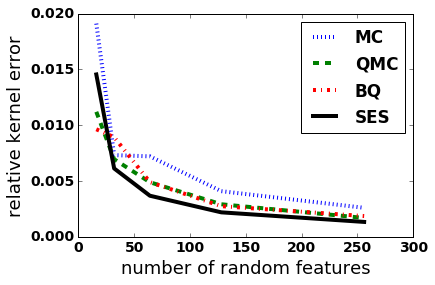}
		\caption{covtype}
		\label{fig:covtype-kdiff}
	\end{subfigure}
	\caption{Kernel approximation results. $x$-axis is number of samples
	used to approximate the integral and $y$-axis shows the relative 
	kernel approximation error.}
	\label{fig:kernel-approx}
\end{figure*}

\begin{table}[!ht]
	\begin{center}
  	\begin{tabular}{lrrr} 
		\toprule
		Dataset & \text{Task} & \text{n} & \text{d}\\
		\midrule
		CPU & regression & 6554 & 21 \\
		Census & regression & 18,186 & 119 \\
        Years & regression & 463,715 & 90 \\
		Adult & classification & 32,561 & 123 \\
		MNIST & classification & 60,000 & 778 \\
        Covtype & classification & 581,012 & 54 \\
		\bottomrule
	\end{tabular}
	\end{center}
  	\caption{Data Statistics. $n$ is number of instances and $d$ is 
        dimension of instances.}
	\label{tb:data-stats}
\end{table}

In this section, we empirically demonstrate the benefits of our proposed method on six data sets listed in Table \ref{tb:data-stats}. The features in all the six data sets are scaled to zero mean and unit variance in a preprocessing step.

We consider two different tasks: i) the quality of kernel approximation and ii) the applicability in supervised learning. Both the relative kernel approximation errors and the regression/classification 
errors are reported on the test sets. 

\subsection{Methods for Comparison}
\begin{itemize}
	\item MC: Monte Carlo approximation with uniform weight. 
    \item QMC: Quasi-Monte Carlo approximation with uniform weight. 
    	We adapt the implementation of \citet{Yang_ICML_14} on Github. 
        \footnote{https://github.com/chocjy/QMC-features}
    \item BQ: QMC sequence with Bayesian Quadrature weights. We 
    	modify the source code from \citet{Huszar_UAI_12}. 
        \footnote{http://www.cs.toronto.edu/~duvenaud/}
\end{itemize}

Gaussian RBF kernel is used through all the experiments where the kernel bandwidth $\sigma$ is tuned on the set $\{2^{-10}, 2^{-8}, \ldots, 2^{8}, 2^{10} \}$ that is in favor of Monte Carlo methods. For QMC method, we use scrambling and shifting techniques recommended in the QMC literature (See \citet{Dick_Acta_13} for more details). In BQ framework, we consider the GP prior with the covariance matrix $k_{GP}(\bw,\bw') = \exp(-\frac{\|\bw-\bw'\|_2^2}{2\sigma_{GP}^2})$ where $\sigma_{GP}$ is tuned on the set $\{2^{-8}, 2^{-6}, \ldots, 2^{6}, 2^{8} \}$ over a subset of sampled pair data $(\bx,\bx')$. Likewise, regularization coefficient $\lambda_{\beta}$ of in the proposed objective function \eqref{eq:cvx-obj-beta} is tuned on the set $\{2^{-8}, 2^{-6}, \ldots, 2^{6}, 2^{8} \}$ over the same subset of sampled pair data. For regression task, we solve ridge regression problems where the regularization coefficient $\lambda$ is tuned by five folds cross-validation. For classification task, we solve L2-loss SVM where the regularization coefficient $C$ is tuned by five folds cross-validation. All experiments code are written in MATLAB and run on a Intel(R) Xeon(R) CPU 2.40GHz Linux server with 32 cores and 190GB memory.

\subsection{Quality of Kernel Approximation}
In our setting, the most natural and fundamental metric for comparison is the quality of approximation of the kernel matrix. We use the relative kernel approximation error $\|K - \tilde{K}\|_F / \|K\|_F$ to measure the quality. The kernel approximation results are shown in Figure \ref{fig:kernel-approx}.

SES outperformed the other methods on cpu, census, adult, mnist, covtype, and is competitive with the best baseline methods over the years data sets. Notice that the performance of SES is particularly strong when the number of random features is relatively small. This phenomenon confirms our theoretical analysis, that is, the smaller the random features are, the larger the variance would be in general. Sour shrinkage estimator with Stein effect enjoys better performance than the empirical mean estimators (MC and QMC) using uniformly weighted features without shrinkage. Although BQ does use weighted random features, its performance is still not as good as SES because tuning the covariance matrix in the Gaussian Process is rather difficult, and BQ's performance is highly sensitive to the tuning of this hyperparameter. Moreover, it is even harder to tune a global bandwidth for all integrand functions that approximate the kernel function.


\subsection{Supervised Learning}
For regression task, we solve the ridge regression and report the relative regression error. $\|\by - \tilde{\by}\|_2 / \|\by\|_2$. For classification task, we solve the L2-loss SVM objective and report the classification error. 

We want to find out whether or not a better kernel approximation always translates to better predictions in the supervised learning tasks. The results of experiments are shown in Table \ref{tb:y-error}. 
\begin{table}[!ht]
	\begin{center}
  	\begin{tabular}{lrrrrr} 
		\toprule
		Dataset & $M$ & MC & QMC & BQ & SES\\
		\midrule
        cpu		& 512 &          3.35\% &	 3.29\% &         3.29\% &\textbf{ 3.27\%} \\
		census	& 256 &			 8.46\% &    6.60\% &\textbf{6.44\%} &			6.49\% \\
		years	& 128 &			0.474\% &	0.473\% &		 0.473\% &\textbf{0.473\%} \\
		\midrule
		adult	& 64  &		    16.21\% &	15.57\% &        15.87\% &\textbf{15.45\%} \\
		mnist	& 256 &\textbf{ 7.33\%} &	 7.72\% &         7.96\% &          7.71\% \\
		covtype	& 128 &		    23.36\% &	22.88\% &        23.13\% &\textbf{22.67\%} \\
		\bottomrule
	\end{tabular}
	\end{center}
  	\caption{Supervised learning errors. $M$ is number of random features 
    used to approximate the integral. Regression error and classification
	errors are reported for regression task and classification task,
	respectively.}
	\label{tb:y-error}
\end{table}
We can see that among the same data set (cpu, census, adult, and covtype) our method performs very well in terms of kernel approximation, and it also yields lower errors in most of the cases of the supervised learning tasks. Note that we only present partial experiment results on some selected number of random features. However, we also observe that the generalization error of our proposed method may be higher than that of other state-of-the-art methods at some other number of random features that we did not present here. This situation is also observed in other works~\cite{Avron_JMLR_16}: better kernel approximation does not always result in better supervised learning performance. Under certain conditions, less number of features (worse kernel approximation) can play a similar role of regularization to avoid overfitting~\cite{rudi2016genrf}. \citet{rudi2016genrf} suggest to solve this issue by tuning number of random features.

\subsection{Distributions of Feature Weights}

We are interested in the distributions of random feature weights derived from BQ and SES. For different numbers of random features ($M=32,128,512$), we plotted the corresponding histograms and compared them with the equal weight of $1/M$ used in MC and QMC, as shown in Figure \ref{fig:wgt-hist}. When the number of random features is small, the difference between BQ and SES is big. As the number of random features increases, the weights of each method gradually converge to a stable distribution, and get increasingly closer to the uniform weight spike. This empirical observation agrees with the theoretical analysis we found, namely with the role of the weight estimator in the bias-variance trade-off.
\begin{figure}[!ht]
    \centering
	\begin{subfigure}[b]{\SingleWidth}
        \includegraphics[width=\textwidth]{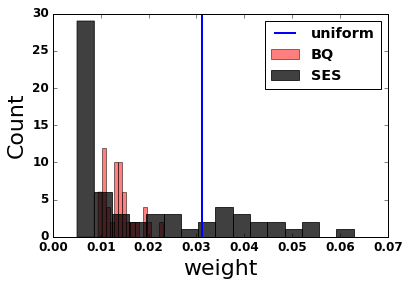}
		\caption{$M=32$}
		\label{fig:dim16-wgt}
	\end{subfigure}
	\begin{subfigure}[b]{\SingleWidth}
        \includegraphics[width=\textwidth]{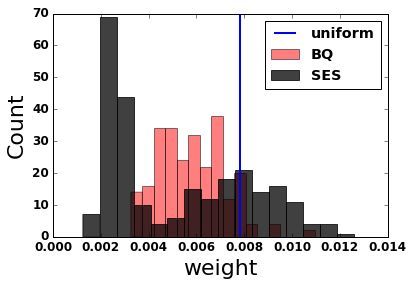}
		\caption{$M=128$}
		\label{fig:dim64-wgt}
	\end{subfigure}
	
	\begin{subfigure}[b]{\SingleWidth}
        \includegraphics[width=\textwidth]{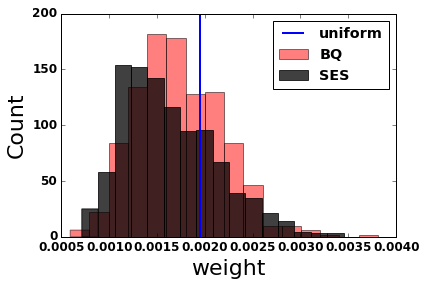}
		\caption{$M=512$}
		\label{fig:dim256-wgt}
	\end{subfigure}
	\begin{subfigure}[b]{\SingleWidth}
        \includegraphics[width=\textwidth]{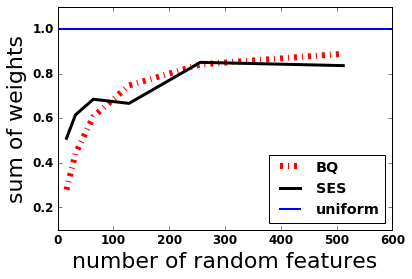}
		\caption{sum of weights versus $M$}
		\label{fig:sum-wgt}
	\end{subfigure}
	
	\caption{A comparison of distribution of weights on adult data set. $M$ is
    the number of random features.}
	\label{fig:wgt-hist}
    \vspace{-0.3cm}
\end{figure}

\subsection{Behavior of Randomized Optimization Solver}
In theory, our randomized solver only needs to sample $O(M)$ data in our proposed objective function \eqref{eq:ridge-sketch} to have a sufficiently accurate solution. We verify this assertion by varying the sampled data size $r$ in sketching matrix $S$ and examine it affects the relative kernel approximation errors. 
The results are shown in Figure \ref{fig:subset}. For census data set, we observe that as the number of sampled data $r$ exceeds the number of random features $M$, the kernel approximation of our proposed method outperforms the state-of-the-art, QMC. Furthermore, the number of sampled data required by our proposed randomized solver is still the same order of $M$ even for a much larger data set covtype, with $0.5$ millions data points. On the other hand, BQ does not have consistent behavior regarding the sampled data size, which is used for tuning the hyperparmaeter $\sigma_{GP}$. This again illustrates that BQ is highly sensitive to the covariance matrix and is difficult to tune $\sigma_{GP}$ on only a small subset of sampled data. In practice, we found that we usually only need to sample up to $2\sim4$ times the number of random features $M$ to get a relatively stable and good minimizer $\bbeta^*$. This observation coincides with the complexity analysis of our proposed randomized solver in Section \ref{sec:Stein}.

\begin{figure}[!ht]
    \centering
	\begin{subfigure}[b]{\SingleWidth}
        \includegraphics[width=\textwidth]{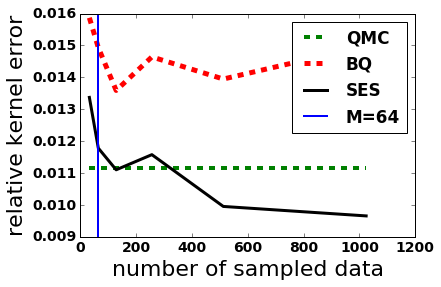}
		\caption{$M=64$, census data set}
		\label{fig:census_subset_64}
	\end{subfigure}
	\begin{subfigure}[b]{\SingleWidth}
        \includegraphics[width=\textwidth]{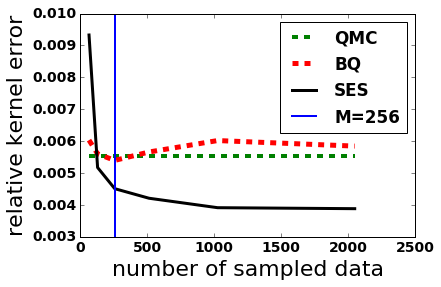}
		\caption{$M=256$, census data set}
		\label{fig:census_subset_256}
	\end{subfigure}
    \\
   	\begin{subfigure}[b]{\SingleWidth}
        \includegraphics[width=\textwidth]{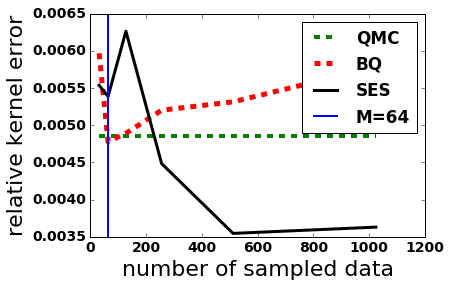}
		\caption{$M=64$, covtype data set}
		\label{fig:covtype_subset_128}
	\end{subfigure}
	\begin{subfigure}[b]{\SingleWidth}
        \includegraphics[width=\textwidth]{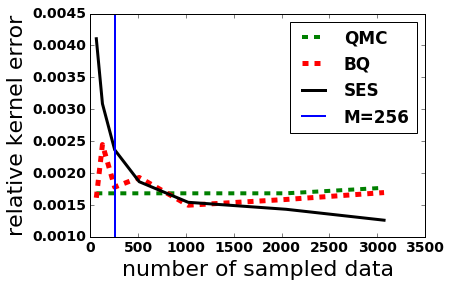}
		\caption{$M=256$, covtype data set}
		\label{fig:covtype_subset_256}
	\end{subfigure}
	
	\caption{A comparison of different sampled size training data in solving
    objective function \eqref{eq:ridge-sketch}.} 
	\label{fig:subset}
    \vspace{-0.25cm}
\end{figure}

\subsection{Uniform v.s. Non-uniform Shrinkage Weight}
We conducted experiments with SES using a uniform shrinkage weight versus using non-uniform shrinkage weights on the cpu and census data sets. The results in Figures \ref{fig:umf} confirm our conjecture that non-uniform shrinkage weights are more beneficial, although both enjoy the lower empirical risk from the Stein effect. We also observed similar behavior on other data sets, and we omit those results. 
\begin{figure}[!ht]
    \centering
	\begin{subfigure}[b]{\SingleWidth}
        \includegraphics[width=\textwidth]{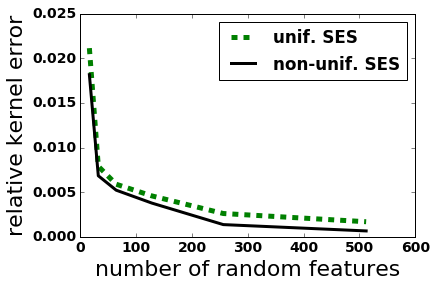}
		\caption{cpu data set}
		\label{fig:cpu_umf_kdiff}
	\end{subfigure}
	\begin{subfigure}[b]{\SingleWidth}
        \includegraphics[width=\textwidth]{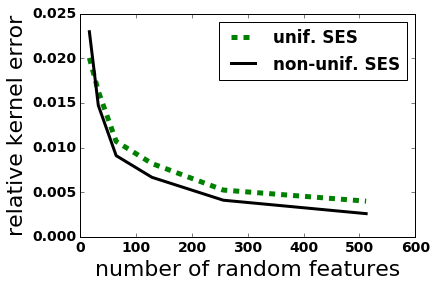}
		\caption{census data set}
		\label{fig:census_umf_kdiff}
	\end{subfigure}
	
    \caption{Comparison of uniform and non-uniform shrinkage weight.} 
	\label{fig:umf}
    \vspace{-0.3cm}
\end{figure}

\section{Discussion and Conclusion}
Nystr{\"o}m methods \cite{Williams_NIPS_01,Drineas_JMLR_05} is yet another line of research for kernel approximation. Specifically, eigendecomposition of the kernel matrix is achieved via low rank factorization with various sampling techniques \cite{wang2013improving}. We refer interested readers to the comprehensive study \cite{yang2012nystrom} for more detailed comparisons. Due to the page limits, the scope of this paper only focus on the study of random Fourier features. 

Finally, we notice \cite{sinha2016learning} propose a weighting scheme for random features which matches the label correlation matrix with the goal to optimize the supervise objective (e.g. classification errors). In contrast, our work aims to optimize the kernel approximation, which does not require the label information to solve weights and can be applied to different tasks after solving the weights once. 

To conclude, we propose a novel shrinkage estimator, which enjoys the benefits of Stein effect w.r.t. lowering the empirical risk compared to empirical mean estimators. Our estimator induces non-uniform weights on the random features with desirable data-driven properties. We further provide an efficient randomized solver to optimize this estimation problem for real-world large-scale applications. The extensive experimental results demonstrate the effectiveness of our proposed method.

\section*{Acknowledgments}
We thank the reviewers for their helpful comments. This work is supported in
part by the National Science Foundation (NSF) under grants IIS-1546329 and IIS-1563887.

\bibliographystyle{named}
\bibliography{local}

\end{document}